\title{Optimal 1-NN Prototypes for Pathological Geometries}
\author[1]{Ilia Sucholutsky}
\author[2]{Matthias Schonlau}
\affil[1,2]{Department of Statistics and Actuarial Science, University of Waterloo}
\DeclarePairedDelimiter\ceil{\lceil}{\rceil}
\newtheorem{theorem}{Theorem}
\newtheorem{corollary}[theorem]{Corollary}
\newcounter{subdefinition}[theorem]
\renewcommand{\thesubdefinition}{\thedefinition.\arabic{subdefinition}}
\begin{abstract}
Using prototype methods to reduce the size of training datasets can drastically reduce the computational cost of classification with instance-based learning algorithms like the k-Nearest Neighbour classifier. The number and distribution of prototypes required for the classifier to match its original performance is intimately related to the geometry of the training data. As a result, it is often difficult to find the optimal prototypes for a given dataset, and heuristic algorithms are used instead. However, we consider a particularly challenging setting where commonly used heuristic algorithms fail to find suitable prototypes and show that the optimal prototypes can instead be found analytically. We also propose an algorithm for finding nearly-optimal prototypes in this setting, and use it to empirically validate the theoretical results. %Consider a dataset consisting of $N$ concentric circles. The points on each circle belong to a different class. We aim to find the minimal number of prototypes required for a 1-Nearest Neighbor classifier to perfectly separate each class after being fitted on the prototypes.
\end{abstract}
\begin{document}

\flushbottom
\maketitle
\thispagestyle{empty}

\section{Background}

The k-Nearest Neighbour (kNN) classifier is a simple but powerful classification algorithm. There are numerous variants and extensions of kNN \citep{dudani1976distance,yigit2015abc,sun2016stabilized,kanjanatarakul2018evidential,gweon2019k}, but the simplest version is the 1NN classifier which assigns a target point to a class based only on the class of its nearest labeled neighbor. Unfortunately, the family of kNN classifiers can be computationally expensive when working with large datasets, as the nearest neighbors must be located for every point that needs to be classified. This has led to the development of prototype selection and generation methods which aim to produce a small set of prototypes that represent the training data \citep{bezdek2001nearest,  triguero2011taxonomy, Bien_2011, garcia2012prototype, kusner2014stochastic}. Using prototypes methods speeds up the kNN classification step considerably as new points can be classified by finding their nearest neighbors among the small number of prototypes. The number of prototypes required to represent the training data can be several orders of magnitude smaller than the number of samples in the original training data. \cite{sucholutsky2020less} showed that by assigning label distributions to each prototype, the number of prototypes may even be reduced to be less than the number of classes in the data. This result was demonstrated on a synthetic dataset consisting of $N$ concentric circles where the points on each circle belong to a different class. The authors found that commonly used prototype generation methods failed to find prototypes that would adequately represent this dataset, suggesting that the dataset exhibits pathological geometries. Further analysis revealed that the soft-label kNN variant required only a fixed number of prototypes to separate any number of these circular classes, while the number of prototypes required by 1NN was shown to have an upper bound of about $t\pi$ for the $t^{th}$ circle as can be seen in Figure~\ref{fig:4circles}. However, this upper bound did not account for the possibility of rotating prototypes on adjacent circles as a method of reducing the number of required prototypes. We explore this direction to analytically find tighter bounds and an approximate solution for the minimal number of prototypes required for a 1-Nearest Neighbor classifier to perfectly separate each class after being fitted on the prototypes. In particular, we show that this problem actually consists of two sub-problems, or cases, only one of which is closely approximated by the previously proposed upper bound. We also propose an algorithm for finding nearly-optimal prototypes and use it to empirically confirm our theoretical results.

\begin{figure}
    \centering
    \includegraphics[width=0.9\textwidth]{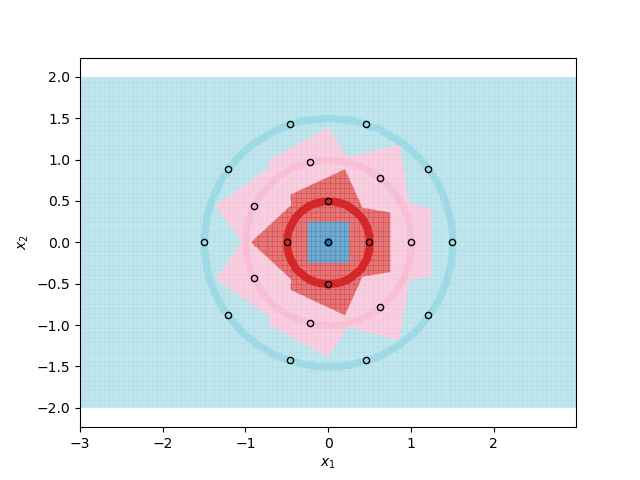}
    \caption{1NN decision boundaries when fitted on $\ceil{t\pi}$ prototypes per class. Each shaded circle represents a different class and the outlined points represent the assigned prototypes. The colored regions correspond to the decision boundaries created by the 1NN classifier.}
    \label{fig:4circles}
\end{figure}
\section{Theory}
\subsection{Preliminaries}
We first proceed to formalize the problem of having a 1-NN classifier separate the classes after being fitted on a minimal number of prototypes. Consistent with \cite{sucholutsky2020less}, we define the $t^{th}$ circle as having radius $tc$ for $t=0,1,\ldots$. Because each class is fully separated from non-adjacent classes by its adjacent classes, it is sufficient to consider arbitrary pairs of adjacent classes when trying to find the optimal prototypes. For the rest of this section, we consider arbitrarily selected circles $t$ and $t+1$ with the following radii. 

\begin{align*}
    r_1 = tc,\; r_2 = (t+1)c,\; t \in \mathbb{N}_0,\; c\in \mathbb{R}_{>0},\;
\end{align*}
Because of the symmetry of each circle, we require that the prototypes assigned to each circle be spaced evenly around it. We assume that circle $t$ and  $t+1$ are assigned $m$ and $n$ prototypes respectively. We define $\theta^*$ as the angle by which the prototypes on circle $t+1$ are shifted relative to the prototypes on circle $t$. We record the locations of these prototypes in Cartesian coordinates.

\begin{align*}
a_i &= (r_1\cos(\frac{2\pi i}{m}), r_1\sin(\frac{2\pi i}{m})), \; i=1,...,m \\
b_j &= (r_2\cos(\frac{2\pi j}{n}+\theta^*), r_2\sin(\frac{2\pi i}{n}+\theta^*)), \; i=1,...,n
\end{align*}
We can then find the arc-midpoints of these prototypes as follows.

\begin{align*}
a^*_i &= (r_1\cos(\frac{2\pi i + \pi}{m}), r_1\sin(\frac{2\pi i +\pi}{m})), \; i=1,...,m \\
b^*_j &= (r_2\cos(\frac{2\pi j + \pi}{n}+\theta^*),\ r_2\sin(\frac{2\pi i+ \pi}{n}+\theta^*)), \; i=1,...,n 
\end{align*}
Letting $d(x,y)$ be the Euclidean distance between points $x$ and $y$, we find the distances between prototypes on the same circle.

\begin{align*}
    d_a(m) &= d(a_i,a^*_i) = \sqrt{2t^2c^2 - 2t^2c^2\cos(\frac{\pi}{m})}\\
    d_b(n) &= d(b_i,b^*_i) = \sqrt{2(t+1)^2c^2 - 2(t+1)^2c^2\cos(\frac{\pi}{n})}
\end{align*}
We also find the shortest distance between prototypes of circle $t$ and arc-midpoints of circle $t+1$ and vice-versa.

\begin{align*}
    d^*_1(m,n,\theta^*) &= \min_{i,j}\{ d(a_i, b^*_j) \| i=1,...,m, \; j=1,...,n \}\\ 
    & = \min_{i,j}\{ \sqrt{t^2c^2+(t+1)^2c^2 -2t(t+1)c^2\cos(\frac{2\pi i}{m} - \frac{2\pi j +\pi}{n} - \theta^*)} \| i=1,...,m, \; j=1,...,n \}
\end{align*}
\begin{align*}
    d^*_2(m,n,\theta^*) &= \min_{i,j}\{ d(a^*_i, b_j) \| i=1,...,m, \; j=1,...,n \}\\ 
    & = \min_{i,j}\{ \sqrt{t^2c^2+(t+1)^2c^2 -2t(t+1)c^2\cos(\frac{2\pi i+\pi}{m} - \frac{2\pi j }{n} - \theta^*)} \| i=1,...,m, \; j=1,...,n \}
\end{align*}
The necessary and sufficient condition for the 1-NN classifier to achieve perfect separation is that the distance between prototypes and arc-midpoints assigned to the same circle, be less than the minimal distance between any arc-midpoint of that circle and any prototype of an adjacent circle. This must hold for every circle. Given these conditions and some fixed number of prototypes assigned to the $t^{th}$ circle, we wish to minimize $n$ by optimizing over $\theta^*$.

\begin{align*}
\textrm{Given } & m, t \; \min_{\theta^*} n\\
\textrm{s.t. }  & d^*_1(m,n,\theta^*)>d_b(n) \\
&   d^*_2(m,n,\theta^*) >d_a(m)
\end{align*}
Inspecting the inequalities, we see that they can be reduced to the following system which we note is now independent of the constant $c$ .

\begin{align}
    -\frac{2t+1}{2(t+1)} &> t\cos(\frac{2\pi i}{m} - \frac{2\pi j +\pi}{n} - \theta^*) - (t+1)\cos(\frac{\pi}{n}) \label{eqn:1}\\
    \frac{2t+1}{2t}&>(t+1)\cos(\frac{2\pi i+\pi}{m} - \frac{2\pi j }{n} - \theta^*) - t\cos(\frac{\pi}{m})\label{eqn:2}
\end{align}
It is clear that $n\ge m$, but we separate this system into two cases, $n=m$ and $n>m$, as the resulting sub-problems will have very different assumptions and solutions. The simpler case is where every circle is assigned the same number of prototypes; however, the total number of circles must be finite and known in advance. In the second case where larger circles are assigned more prototypes, we assume that the number of circles is countable but not known in advance.
We also note that for $t=0$, a circle with radius $0$, exactly one prototype is required. Given this starting point, it can be trivially shown that for $t=1$, a minimum of four prototypes are required to satisfy the conditions above (three if the strict inequalities are relaxed to allow equality). However for larger values of $t$, careful analysis is required to determine the minimal number of required prototypes. 

\subsection{Upper bounds}
We first show how our setup can be used to derive the upper bound that was found by \cite{sucholutsky2020less}.
\begin{theorem}[Previous Upper Bound]
The minimum number of prototypes required to perfectly separate $N$ concentric circles is bounded from above by approximately $\sum_{t=1}^Nt\pi$, if each circle can have a different number of assigned prototypes.
\end{theorem}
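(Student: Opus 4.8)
The plan is to exhibit an explicit prototype assignment that requires no rotation of adjacent circles yet already satisfies every separation constraint, and then to count. Since each class is separated from non-adjacent classes automatically, it suffices to satisfy the pairwise system \eqref{eqn:1}--\eqref{eqn:2} for every consecutive pair of circles. The first and crucial step is to observe that the optimization over $\theta^*$ can be dropped altogether: for any prototype $a_i$ on circle $t$ and arc-midpoint $b^*_j$ on circle $t+1$, the triangle inequality through the origin gives $d(a_i,b^*_j)\ge (t+1)c-tc=c$, and symmetrically $d(a^*_i,b_j)\ge c$, so $d^*_1$ and $d^*_2$ are at least $c$ for every $\theta^*$. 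Equivalently, the cosine terms in \eqref{eqn:1} and \eqref{eqn:2} are bounded above by $1$, and replacing each of them by $1$ yields a $\theta^*$-independent sufficient condition.

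After that substitution, \eqref{eqn:1} becomes $\cos(\pi/n)>1-\tfrac{1}{2(t+1)^2}$ and \eqref{eqn:2} becomes $\cos(\pi/m)>1-\tfrac{1}{2t^2}$; via $1-\cos\phi=2\sin^2(\phi/2)$ these are $\sin(\pi/2n)<\tfrac{1}{2(t+1)}$ and $\sin(\pi/2m)<\tfrac{1}{2t}$. I would then note that a given circle $t$ plays the ``inner'' role in one pair and the ``outer'' role in the next, and that both requirements on its prototype count $n_t$ collapse to the single inequality $\sin(\pi/2n_t)<\tfrac{1}{2t}$. It remains to check that $n_t=\ceil{\pi t}$ satisfies it: since $\ceil{\pi t}\ge\pi t$ we get $\pi/(2n_t)\le 1/(2t)$, and $\sin x<x$ for $x>0$ supplies the strict inequality. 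For $t=1$ this gives $n_1=\ceil{\pi}=4$ (and $3$ under the relaxed constraints, since $\sin(\pi/6)=\tfrac12$), matching the fact noted in the preliminaries; assigning the single forced prototype to the degenerate circle $t=0$ completes a valid configuration.

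Summing over all circles, the total number of prototypes is $1+\sum_{t=1}^N\ceil{\pi t}$. Because $\pi t\le\ceil{\pi t}<\pi t+1$, this lies between $\sum_{t=1}^N t\pi$ and $\sum_{t=1}^N t\pi + N + 1$, i.e.\ it equals $\sum_{t=1}^N t\pi$ up to an additive term of lower order in $N$, which is the asserted approximate bound.

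The argument has no genuinely hard step; the point most in need of care is that the reduction replaces $d^*_1$ and $d^*_2$ by their worst-case value $c$ irrespective of $\theta^*$, which is precisely what discards the freedom to rotate neighbouring circles. That is why the resulting count is only an upper bound and why ``$t\pi$ prototypes per circle'' should be presented as the estimate that the subsequent, rotation-aware analysis is meant to tighten; one should likewise keep every inequality strict, hence the ceiling and the use of $\sin x<x$ rather than $\sin x\le x$.
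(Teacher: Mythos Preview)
Your proof is correct and follows essentially the same approach as the paper's: both replace the cosine factors in \eqref{eqn:1}--\eqref{eqn:2} by their worst-case value $1$ to obtain a $\theta^*$-free sufficient condition, then solve for $m$ and $n$. Your version is somewhat more explicit---you verify via $\sin x<x$ that $n_t=\ceil{\pi t}$ actually satisfies the resulting inequality, and you observe that the inner and outer constraints on circle $t$ coincide---whereas the paper simply inverts $\arccos$ and writes $\approx t\pi$; but the underlying idea is identical.
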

\begin{proof}
Given the setup above, we first consider the worst case scenario where a $\theta^*$ is selected such that $\cos(\frac{2\pi i}{m} - \frac{2\pi j +\pi}{n} - \theta^*) = \cos(\frac{2\pi i+\pi}{m} - \frac{2\pi j }{n} - \theta^*) = cos(0) =1$. We can then solve Inequality~\ref{eqn:1} for $n$ and Inequality~\ref{eqn:2} for $m$.

\begin{align*}
    -\frac{2t+1}{2(t+1)} &> t\cos(0) - (t+1)\cos(\frac{\pi}{n}) \\
    \cos(\frac{\pi}{n}) &> \frac{2(t+1)^2-1}{2(t+1)^2}\\
    n &> \frac{\pi}{\arccos(\frac{2(t+1)^2-1}{2(t+1)^2})} \approx (t+1)\pi\\
    \frac{2t+1}{2t}&>(t+1)\cos(0) - t\cos(\frac{\pi}{m})\\
    \cos(\frac{\pi}{m}) &> \frac{2t^2-1}{2t^2}\\
    m &> \frac{\pi}{\arccos(\frac{2t^2-1}{2t^2})} \approx t\pi
\end{align*}
This is exactly the previously discovered upper bound.
\end{proof}

However, note that we assumed that there exists such a $\theta^*$, but this may not always be the case for $n>m$. If we instead use the same number of prototypes for each circle (i.e. $m=n$), then we can always set $\theta^* = \frac{\pi}{n}$. This results in a configuration where every circle is assigned $n = \ceil{\frac{\pi}{\arccos(\frac{2(t+1)^2-1}{2(t+1)^2})}} \approx \ceil{(t+1)\pi}$ prototypes. While the minimum number of prototypes required on the $t^{th}$ circle remains the same, the \textit{total} minimum number of prototypes required to separate $N$ circles is higher as each smaller circle is assigned the same number of prototypes as the largest one.

\begin{corollary}[Upper Bound - Same Number of Prototypes on Each Circle]
The minimum number of prototypes required to perfectly separate $N$ concentric circles is bounded from above by approximately $N^2\pi$, if each circle must have the same number of assigned prototypes.
\end{corollary}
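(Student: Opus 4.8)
\subsection*{Proof proposal}

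The plan is to reuse the feasibility construction sketched in the paragraph preceding the corollary: when every circle carries the same number of prototypes $n$ (so $m=n$), the shift $\theta^*=\pi/n$ is always available, and I will show this single choice simultaneously satisfies the separation conditions for every adjacent pair among the $N$ circles. Substituting $m=n$ and $\theta^*=\pi/n$ into Inequalities~\ref{eqn:1} and~\ref{eqn:2}, both cosine arguments collapse to integer multiples of $2\pi/n$ (one of them shifted by an integer, the other not), so the minimizing indices $i,j$ still push each cosine to its maximal value $1$, exactly the worst case analyzed in Theorem~1. Hence, for the pair $(t,t+1)$, Inequality~\ref{eqn:1} reduces to $\cos(\pi/n) > \frac{2(t+1)^2-1}{2(t+1)^2}$ and Inequality~\ref{eqn:2} reduces to $\cos(\pi/n) > \frac{2t^2-1}{2t^2}$.

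Next I would locate the binding constraint. Both right-hand sides increase with the circle index, so over all adjacent pairs drawn from the $N$ circles the strictest requirement is the instance of Inequality~\ref{eqn:1} coming from the outermost pair, namely $\cos(\pi/n) > \frac{2N^2-1}{2N^2}$; every instance of Inequality~\ref{eqn:2} involves only indices $t\le N-1$ and is therefore implied by it. Solving this single inequality, it suffices to take
\begin{align*}
n \;=\; \ceil*{\frac{\pi}{\arccos\!\left(\frac{2N^2-1}{2N^2}\right)}} \;\approx\; \ceil{N\pi},
\end{align*}
using the small-angle estimate $\arccos(1-x)\approx\sqrt{2x}$. Since this $n$ is feasible for every pair at once, assigning $n$ prototypes to each of the $N$ circles separates all classes, for a total of $N\ceil{N\pi}+O(1) \approx N^2\pi$ prototypes, the $O(1)$ accounting for the degenerate circle at the origin.

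I expect the only genuine subtlety to be the bookkeeping over which pair and which of the two inequalities is binding, together with confirming that the shift $\theta^*=\pi/n$ does not let us evade the worst-case cosine value $1$: it does not, because as $i$ and $j$ range over their index sets the difference $i-j$ sweeps all residues modulo $n$. The strictness of the inequalities is a minor point: the ceiling may need to be increased by one, or the small gap absorbed into the word ``approximately'', but this does not affect the $N^2\pi$ order of the bound. No matching lower bound is needed, since the corollary only asserts an upper bound.
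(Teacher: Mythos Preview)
Your proposal is correct and follows essentially the same approach as the paper: take $m=n$ with the shift $\theta^*=\pi/n$, observe that the cosine arguments then hit $0$ so the worst-case analysis of Theorem~1 applies verbatim, and let the outermost pair dictate $n\approx N\pi$, for a total of about $N^2\pi$. You are somewhat more explicit than the paper about why the binding constraint comes from the largest index and why the chosen shift does not avoid the worst-case cosine, but the substance is the same.
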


\subsection{Lower bounds}
An advantage of our formulation of the problem is that it also enables us to search for lower bounds by modifying the $\theta^*$ parameter. We can investigate the scenario where a $\theta^*$ is selected that simultaneously maximizes $d^*_1(m,n,\theta^*)$ and $d^*_1(m,n,\theta^*)$.
\begin{theorem}[Lower Bound]
The minimum number of prototypes required to perfectly separate $N$ concentric circles is bounded from below by approximately $\sum_{t=1}^Nt^{\frac{1}{2}}\pi$, if each circle must have a different number of assigned prototypes.
\end{theorem}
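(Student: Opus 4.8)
The plan is to run the upper-bound argument above with the cosine terms pushed the other way. There the bound came from the \emph{adversarial} $\theta^*$ forcing $\cos\big(\tfrac{2\pi i}{m}-\tfrac{2\pi j+\pi}{n}-\theta^*\big)=1$; for a lower bound I would instead argue that even the \emph{most favorable} $\theta^*$ cannot drive that cosine term below $\cos(\pi/n)$, and that this alone forces $n\gtrsim\pi\sqrt{t+1}$ prototypes on circle $t+1$.

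First I would observe that the whole bound comes out of Inequality~\ref{eqn:1} alone: running the analogous substitution in Inequality~\ref{eqn:2} leaves the condition $\cos(\pi/m)<\tfrac{2t+1}{2t}$, which is vacuous since $\tfrac{2t+1}{2t}>1$, so the $d^*_2>d_a(m)$ constraint never sharpens the bound. This matches the intuition that it is the \emph{outer} circle that is hard to cover, so I can work entirely with $d^*_1>d_b(n)$.

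The key step is a pigeonhole claim that is uniform in $\theta^*$: for every $\theta^*$,
\[
d^*_1(m,n,\theta^*)\;\le\;\sqrt{t^2c^2+(t+1)^2c^2-2t(t+1)c^2\cos(\pi/n)}.
\]
Indeed, the $n$ arc-midpoints $b^*_j$ are spaced by $2\pi/n$ around circle $t+1$, so for any prototype $a_i$ on circle $t$ some $b^*_j$ lies within angular distance $\pi/n$ of it, and the chord $d(a_i,b^*_j)$ is increasing in that angle on $[0,\pi]$; taking the minimum over $i$ as well preserves the bound, and when $t=0$ (so $r_1=0$) both sides collapse to $(t+1)c$. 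Because this holds for all $\theta^*$, no choice of shift can evade it, so it is a genuine universal bound and not merely a property of one configuration.

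Plugging this upper bound on $d^*_1$ into the requirement $d^*_1>d_b(n)=\sqrt{2(t+1)^2c^2\big(1-\cos(\pi/n)\big)}$, the factors of $c$ cancel and a one-line simplification — precisely Inequality~\ref{eqn:1} with its cosine term replaced by the worst case $\cos(\pi/n)$ — collapses everything to $\cos(\pi/n)>\tfrac{2t+1}{2(t+1)}$, i.e. $n>\pi/\arccos\!\big(\tfrac{2t+1}{2(t+1)}\big)$. Since $\tfrac{2t+1}{2(t+1)}=1-\tfrac{1}{2(t+1)}$ and $\arccos(1-\varepsilon)\approx\sqrt{2\varepsilon}$, this gives $n\gtrsim\pi\sqrt{t+1}$, and the estimate is insensitive to $m$, so it holds in both the $n=m$ and $n>m$ regimes. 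Summing over adjacent pairs (circle $0$ contributes one prototype, and circle $s$ contributes more than $\pi\sqrt{s}$ for each $s\ge1$) yields the claimed total $\approx\sum_{t=1}^{N}t^{1/2}\pi$. I expect the only genuine subtlety to be phrasing the pigeonhole step so that its uniformity in $\theta^*$ is manifest; the remaining algebra and the $\arccos(1-\varepsilon)\approx\sqrt{2\varepsilon}$ expansion are identical to the upper-bound proof above.
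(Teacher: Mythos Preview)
Your argument is correct and follows the same core route as the paper's: replace the critical cosine in Inequality~\ref{eqn:1} by its best-case value $\cos(\pi/n)$, solve to get $\cos(\pi/n)>\tfrac{2t+1}{2(t+1)}$, and use $\arccos(1-\varepsilon)\approx\sqrt{2\varepsilon}$ to obtain $n\gtrsim\pi\sqrt{t+1}$. The differences are in justification and scope. The paper simply \emph{posits} that the best case has both cosines equal to $\cos(\pi/n)$ and then concedes that such a $\theta^*$ may not exist; your pigeonhole step (the $b^*_j$ are $2\pi/n$-spaced, so some arc-midpoint lies within angle $\pi/n$ of any fixed $a_i$) supplies the missing uniformity in $\theta^*$ and turns that heuristic into a genuine necessary condition on $n$. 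Conversely, the paper also substitutes $\cos(\pi/n)$ into Inequality~\ref{eqn:2} and, after saturating its $n$-bound, extracts a companion constraint $m>\pi/\arccos\!\big(\tfrac{2t^2-t-1}{2t^2}\big)\approx \pi t/\sqrt{t+1}$; you instead discard Inequality~\ref{eqn:2} via the weaker $\pi/m$-pigeonhole, which is harmless for the stated total because the Inequality~\ref{eqn:1} bound applied to the pair $(t-1,t)$ already forces roughly $\pi\sqrt{t}$ prototypes on circle $t$, matching the paper's $m$-bound to leading order.
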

\begin{proof}
If $m\ne n$, the best case would be a $\theta^*$ such that $\cos(\frac{2\pi i}{m} - \frac{2\pi j +\pi}{n} - \theta^*) = \cos(\frac{2\pi i+\pi}{m} - \frac{2\pi j }{n} - \theta^*) = cos(\frac{\pi}{n})$.  Solving the inequalities leads to the following values for $m$ and $n$.  
\begin{align*}
    n &> \frac{\pi}{\arccos(\frac{2t+1}{2(t+1)})} \approx (t+1)^{\frac{1}{2}}\pi\\
    m &> \frac{\pi}{\arccos(\frac{2t^2-t-1}{2t^2})} \approx \frac{t}{(t+1)^{\frac{1}{2}}}\pi
\end{align*}
We note again that such a $\theta^*$ may not always exist. 
\end{proof}

\subsection{Exact and approximate solutions}

In the case where $m=n$, we can always choose a $\theta^*$ such that $\cos(\frac{2\pi i}{m} - \frac{2\pi j +\pi}{n} - \theta^*) = cos(\frac{\pi}{n})$. Solving the inequalities, we get that $n> \frac{\pi}{\arccos(\frac{2t+1}{2(t+1)})} \approx (t+1)^{\frac{1}{2}}\pi$. Thus we have a tight bound for this case.

\begin{corollary}[Exact Solution - Same Number of Prototypes on Each Circle]
The minimum number of prototypes required to perfectly separate $N$ concentric circles is approximately $N^{\frac{3}{2}}\pi$, if each circle must have the same number of assigned prototypes.
\end{corollary}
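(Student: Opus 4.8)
The plan is to upgrade the per-pair inequality $n>\pi/\arccos\!\left(\tfrac{2t+1}{2(t+1)}\right)$, already isolated above for the $m=n$ regime, into an exact count for the whole family of $N$ circles. First I would pin down the one shift that makes this bound achievable simultaneously for every adjacent pair: take $\theta^*=0$ on every circle, so that all circles share the angular positions $\{2\pi i/n\}$. Then for the pair $(t,t+1)$ the angles entering both $d^*_1$ and $d^*_2$ run exactly through the odd multiples $\{(2k-1)\pi/n:k\in\mathbb{Z}\}$, whose largest cosine is $\cos(\pi/n)$. Substituting $\cos(\pi/n)$ for the cosine terms, Inequality~\ref{eqn:2} becomes $\tfrac{2t+1}{2t}>\cos(\pi/n)$, which is automatic since the left-hand side exceeds $1$, while Inequality~\ref{eqn:1} collapses to $\cos(\pi/n)>\tfrac{2t+1}{2(t+1)}$; with $\tfrac{2t+1}{2(t+1)}=1-\tfrac1{2(t+1)}$ and $\arccos(1-x)\sim\sqrt{2x}$ as $x\to0^+$ this is the condition $n\gtrsim(t+1)^{1/2}\pi$.

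Next I would show the same inequality is necessary for \emph{any} evenly spaced configuration, so that the threshold is exact rather than merely an upper bound. For $n$ equally spaced prototypes, whatever $\theta^*$ is, some prototype lies within angle $\pi/n$ of any given arc-midpoint, so the maximal cosine $C_1$ governing $d^*_1$ satisfies $\cos(\pi/n)\le C_1\le 1$; hence $tC_1-(t+1)\cos(\pi/n)\ge-\cos(\pi/n)$, and feeding this into Inequality~\ref{eqn:1} forces $\cos(\pi/n)>\tfrac{2t+1}{2(t+1)}$ regardless of the shift. Combined with the construction above, $n>f(t):=\pi/\arccos\!\left(\tfrac{2t+1}{2(t+1)}\right)$ is both necessary and sufficient for the pair $(t,t+1)$.

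Finally, since $1-\tfrac1{2(t+1)}$ is strictly increasing in $t$, so is $f$, and therefore among the $N$ adjacent-pair constraints the one from the outermost circle dominates all the others; choosing $n=\ceil{f(N-1)}\approx\ceil{N^{1/2}\pi}$ thus satisfies every constraint, and no smaller common value does. Assigning this $n$ to each of the $N$ circles (plus the lone prototype forced at the degenerate radius-$0$ circle) gives a total of roughly $N\cdot N^{1/2}\pi=N^{3/2}\pi$, which is both attained by the $\theta^*=0$ construction and, by the necessity direction, cannot be beaten within the evenly spaced family the problem restricts to. The only real work I expect is the two elementary bracketing facts about equally spaced points on a circle — that $n$ such points leave no point of the circle farther than $\pi/n$ from the set, with $\pi/n$ achieved exactly when $0$ sits mid-gap — together with keeping track of the small-angle $\arccos$ asymptotics and the $\ceil{\cdot}$ and index off-by-ones that the word ``approximately'' is absorbing; the reduction to the outermost pair via monotonicity of $f$ is then immediate.
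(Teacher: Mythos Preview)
Your proposal is correct and follows essentially the same route as the paper: with $m=n$, pick the shift that makes the governing cosine equal $\cos(\pi/n)$ (you identify this explicitly as $\theta^*=0$, which the paper leaves implicit), reduce Inequality~\ref{eqn:1} to $\cos(\pi/n)>\tfrac{2t+1}{2(t+1)}$, and read off $n\approx(t+1)^{1/2}\pi$ so that $N$ circles need about $N^{3/2}\pi$ prototypes. Your explicit necessity argument via $C_1\ge\cos(\pi/n)$ and the monotonicity step singling out the outermost pair are details the paper simply asserts (``tight bound''), so you are filling in gaps rather than taking a different path.
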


When $m > n$, we have that $\cos(\frac{2\pi i}{m} - \frac{2\pi j +\pi}{n} - \theta^*) > cos(\frac{\pi}{n})$ as $\frac{2\pi i}{m} - \frac{2\pi j}{n} = \frac{2\pi c_1 \gcd(m,n)}{mn},\; c_1 \in \mathbb{N}_0$. Let $q:= \frac{2\pi \gcd(m,n)}{mn}$, then $|\frac{2\pi i}{m} - \frac{2\pi j +\pi}{n} - \theta^*| \le \frac{q}{2}$ and $|\frac{2\pi i+\pi}{m} - \frac{2\pi j }{n} - \theta^*| \le \frac{q}{2}$. Thus $\cos(\frac{2\pi i}{m} - \frac{2\pi j +\pi}{n} - \theta^*) \ge cos(\frac{q}{2}), \text{ and } \cos(\frac{2\pi i+\pi}{m} - \frac{2\pi j }{n} - \theta^*) \ge cos(\frac{q}{2})$. \\
Using the series expansion at $q=0$ we can find that $\cos(\frac{q}{2})=1 - \frac{q^2}{8} + \frac{q^4}{384} - \frac{q^6}{46080} + O(q^8)$. 

\begin{theorem}[First Order Approximation - Different Number of Prototypes on Each Circle]
\label{lem:order1}
The minimum number of prototypes required to perfectly separate $N$ concentric circles is approximately $1+\sum_{t=1}^Nt\pi$, if each circle must have a different number of assigned prototypes.
\end{theorem}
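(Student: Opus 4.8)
The plan is to combine the exact tight bound from the $m=n$ analysis with the first-order term of the cosine series expansion $\cos(q/2) = 1 - q^2/8 + O(q^4)$ derived just above the statement. The key geometric observation is that when $m>n$, the offsets $\frac{2\pi i}{m} - \frac{2\pi j}{n}$ form a lattice of spacing $q = \frac{2\pi\gcd(m,n)}{mn}$, so by choosing $\theta^*$ optimally we can guarantee both $|\frac{2\pi i}{m} - \frac{2\pi j + \pi}{n} - \theta^*| \le q/2$ and $|\frac{2\pi i + \pi}{m} - \frac{2\pi j}{n} - \theta^*| \le q/2$ for all $i,j$, hence $\cos(\cdot) \ge \cos(q/2)$. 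Substituting the first-order truncation $\cos(q/2) \approx 1 - q^2/8$ into Inequalities~\ref{eqn:1} and~\ref{eqn:2} in place of the exact cosine terms converts the two transcendental constraints into algebraic inequalities in $m$, $n$ (and $q$), which can then be solved.

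First I would set up the substitution: replace $\cos(\frac{2\pi i}{m} - \frac{2\pi j + \pi}{n} - \theta^*)$ by its worst-case lower bound $\cos(q/2) \approx 1 - q^2/8$ in~\ref{eqn:1}, and likewise in~\ref{eqn:2}, and also replace $\cos(\pi/n)$ and $\cos(\pi/m)$ by their own second-order expansions $1 - \pi^2/(2n^2)$ and $1 - \pi^2/(2m^2)$. This turns~\ref{eqn:1} into an inequality of the form $-\frac{2t+1}{2(t+1)} > t(1 - q^2/8) - (t+1)(1 - \pi^2/(2n^2))$, i.e. roughly $\frac{(t+1)\pi^2}{2n^2} > \frac{1}{2(t+1)} + \frac{tq^2}{8}$. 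Next I would argue that in the regime where $m$ is close to $t\pi$ and $n$ close to $(t+1)\pi$, the lattice spacing $q = \frac{2\pi\gcd(m,n)}{mn}$ is small — in the generic case $\gcd(m,n)$ is $O(1)$ so $q = O(1/t^2)$ and the $tq^2$ term is $O(1/t^3)$, which is negligible compared to the $O(1/t)$ right-hand side. Dropping it recovers $n \gtrsim (t+1)\pi$, and symmetrically $m \gtrsim t\pi$ from~\ref{eqn:2}; summing over $t=1,\dots,N$ and adding the single prototype for $t=0$ gives $1 + \sum_{t=1}^N t\pi$.

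The main obstacle — and the place where the argument has to be handled carefully rather than waved through — is justifying that the correction term involving $q$ is genuinely lower-order, i.e. that one can always choose $m$ and $n$ near the target values with $\gcd(m,n)$ small enough that $q$ does not inflate the required count, and simultaneously that a single $\theta^*$ can satisfy both $q/2$-bounds at once. The two conditions $|\frac{2\pi i}{m} - \frac{2\pi j + \pi}{n} - \theta^*| \le q/2$ and $|\frac{2\pi i + \pi}{m} - \frac{2\pi j}{n} - \theta^*| \le q/2$ pull $\theta^*$ toward two different lattices (shifted by $\pi/n$ versus $\pi/m$), so I would need to check that these shifts are compatible modulo $q$ — this is where the first-order nature of the approximation is really being used, since at higher order the mismatch between the $\pi/m$ and $\pi/n$ shifts would start to matter. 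I would also note, as the paper does elsewhere, that the $\gcd$ condition means the bound is an approximation that holds for "most" $t$ rather than an exact statement, and flag that the higher-order corrections (the $q^4$ term and the interaction between the two inequalities) are precisely what distinguishes this first-order result from a sharper estimate.
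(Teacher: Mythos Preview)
Your approach is essentially the paper's: substitute the first-order expansions $\cos(q/2)\approx 1-q^2/8$ and $\cos(\pi/n)\approx 1-\pi^2/(2n^2)$ into Inequality~\ref{eqn:1}, then argue that the $q$-correction is lower order so that $n\gtrsim (t+1)\pi$. The difference is in how the obstacle you flag gets resolved.

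You correctly identify bounding $\gcd(m,n)$ as the crux, but stop at ``in the generic case $\gcd(m,n)$ is $O(1)$'' and leave it open. The paper closes this cleanly by invoking the already-established upper bound (Theorem~1): since that bound gives roughly $t\pi$ prototypes on circle $t$ and $(t+1)\pi$ on circle $t+1$, one has $m+1\le n\le m+4$. Because $\gcd(m,n)$ divides $n-m$, this forces $\gcd(m,n)\le 4$ unconditionally, hence $\gcd(m,n)^2/m^2=O(1/n^2)$ with no genericity assumption. Substituting this back, the $q$-term is absorbed and $n^2>\pi^2(t+1)^2+(\text{lower order})$ follows. So the missing idea in your outline is simply to feed the earlier upper bound back in to control $n-m$.

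Two smaller remarks. First, your displayed inequality $\frac{(t+1)\pi^2}{2n^2} > \frac{1}{2(t+1)} + \frac{tq^2}{8}$ has the direction reversed; carrying the algebra through from Inequality~\ref{eqn:1} gives the opposite sign, though the eventual conclusion is unaffected. Second, your concern about whether a single $\theta^*$ can achieve both $q/2$-bounds simultaneously is reasonable, but the paper's proof sidesteps it: it works only with Inequality~\ref{eqn:1} and never returns to Inequality~\ref{eqn:2}, treating the result as a one-sided first-order approximation rather than a two-sided tight bound. So that part of your worry goes beyond what the paper actually proves.
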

\begin{proof}
For a first order approximation, we consider $\cos(\frac{q}{2})=1 - \frac{q^2}{8} + O(q^4)$ and $\cos(\frac{\pi}{n})=1-\frac{\pi^2}{2n^2}+O(\frac{1}{n^4})$. Inequality~\ref{eqn:1} then becomes the following. 

\begin{align*}
    -\frac{2t+1}{2(t+1)} &> t(1 - \frac{q^2}{8} + O(q^4)) - (t+1)(1-\frac{\pi^2}{2n^2}+O(\frac{1}{n^4}))\\
    &=-1 - \frac{\pi^2}{2n^2}(t\frac{\gcd(m,n)^2}{m^2}-t-1) +O(\frac{1}{n^4})\\
    n^2&>-\pi^2(t+1)(t\frac{\gcd(m,n)^2}{m^2}-t-1) +O(\frac{1}{n^2})
\end{align*}
However, we know from our previous upper bound that $m+1\le n\le m+4$.\\ Thus $\frac{4}{(n-4)^2}>\frac{\gcd(m,n)^2}{m^2} > \frac{1}{(n-1)^2}$ which means that $\frac{\gcd(m,n)^2}{m^2} = O(\frac{1}{n^2})$. 

\begin{align*}
    n^2&>-\pi^2(t+1)(t\frac{\gcd(m,n)^2}{m^2}-t-1) +O(\frac{1}{n^2})\\
    &= \pi^2(t+1)^2 + O(\frac{1}{n^2})
\end{align*}
Therefore we have that $n+O(\frac{1}{n})>(t+1)\pi$ as desired.
\end{proof}

\begin{figure}
    \centering
    \includegraphics[width=0.8\textwidth]{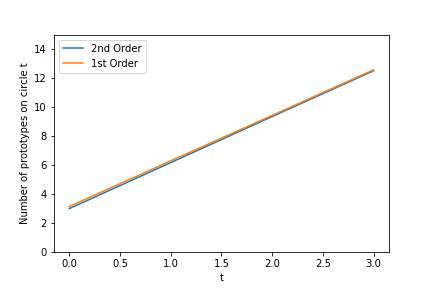}
    \caption{Approximations for the minimal number of prototypes that must be assigned to circle $t$.}
    \label{fig:approximations}
\end{figure}
We plot the second order approximation alongside the first order approximation from Lemma~\ref{lem:order1} in Figure~\ref{fig:approximations} to show that the two quickly converge. Thus we can be confident that approximately $t\pi$ prototypes are required for the  $t^{th}$ circle as this approximation quickly approaches the true minimal number of required prototypes as $t$ increases. Since we can only assign a positive integer number of prototypes to each circle,  we assign $\ceil{t\pi}$ prototypes to the $t^{th}$ circle. Applying this to the initial condition that the $0^{th}$ circle is assigned exactly one prototype results in the following sequence of the minimal number of prototypes that must be assigned to each circle. We note that the sequence generated by the second order approximation would be almost identical, but with a 3 replacing the 4.
\begin{align*}
    1,4,7,10,13,16,19,22,26,29,32,35,38,41 \ldots
\end{align*} 
\begin{corollary}[Approximate Solution - Different Number of Prototypes on Each Circle]
\label{thm:approx}
The minimum number of prototypes required to perfectly separate $N$ concentric circles is approximately $\sum_{t=1}^N\ceil{t\pi}\approx\frac{N+N(N+1)\pi}{2}$, if each circle must have a different number of assigned prototypes.
\end{corollary}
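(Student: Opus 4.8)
\medskip
\noindent\textbf{Proof proposal.} The plan is to reduce the claim to the per-circle estimate already established in Theorem~\ref{lem:order1} and then evaluate the resulting sum, tracking only which error terms survive at the stated level of approximation. First I would invoke Theorem~\ref{lem:order1}: for each $t\ge 1$ the $t^{th}$ circle admits a valid configuration as soon as the integer $n$ of assigned prototypes satisfies $n + O(1/n) > t\pi$, so the least admissible value is $\ceil{t\pi}$ up to an additive error that is $o(1)$ as $t\to\infty$. Combined with the preliminary observation that the $0^{th}$ (degenerate) circle requires exactly one prototype and that separating adjacent pairs is sufficient for the whole family, the minimal total is $1+\sum_{t\ge 1}\ceil{t\pi}$ summed over the $N$ circles; up to a bounded index shift and the single central prototype this is $\sum_{t=1}^{N}\ceil{t\pi}$, which is the first expression in the statement.

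It then remains to estimate $\sum_{t=1}^{N}\ceil{t\pi}$ in closed form. I would split $\ceil{t\pi} = t\pi + (\ceil{t\pi}-t\pi)$. The first part sums to $\pi\cdot\tfrac{N(N+1)}{2}$ by the arithmetic-series formula. For the second part, since $\pi$ is irrational the number $t\pi$ is never an integer, so $\ceil{t\pi}-t\pi = 1-\{t\pi\}\in(0,1)$; by Weyl's equidistribution theorem the fractional parts $\{t\pi\}$ are uniformly distributed in $[0,1)$, hence $\frac1N\sum_{t=1}^{N}(1-\{t\pi\})\to\int_0^1(1-x)\,dx=\tfrac12$, so $\sum_{t=1}^{N}(\ceil{t\pi}-t\pi)=\tfrac{N}{2}+o(N)$. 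Adding the two pieces gives $\pi\tfrac{N(N+1)}{2}+\tfrac{N}{2}=\tfrac{N+N(N+1)\pi}{2}$, matching the second expression; the $+1$ from the central prototype, the bounded index shift, and the accumulated per-circle errors $\sum_{t=1}^{N}O(1/t)=O(\log N)$ inherited from Theorem~\ref{lem:order1} are all $o(N)$ and are absorbed into the word ``approximately''.

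The only genuinely non-routine point, and where I would spend the most care, is the treatment of the ceiling-correction term: I want it to contribute the clean linear term $N/2$ rather than behave erratically, and that is exactly what requires the equidistribution (Weyl) input --- or, for an elementary alternative, a three-distance-theorem bound on the partial sums of $\{t\pi\}$. The remaining bookkeeping, namely that the quadratic term $\pi N(N+1)/2$ dominates and that the $O(\log N)$-type errors from the first-order approximation do not accumulate to the order of $N$, is straightforward; everything else is summing an arithmetic progression.
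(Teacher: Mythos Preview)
Your proposal is correct and follows the same line as the paper: the per-circle count $\ceil{t\pi}$ is read off from Theorem~\ref{lem:order1} and then summed over $t=1,\ldots,N$. The paper in fact gives no formal proof of this corollary at all; it simply states the closed form $\tfrac{N+N(N+1)\pi}{2}$ after the discussion of the sequence, without any justification of the $\approx$ sign.

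Where you go beyond the paper is in the second half, the evaluation of $\sum_{t=1}^N\ceil{t\pi}$. The paper implicitly uses the heuristic ``on average the ceiling adds $1/2$'' to arrive at $\pi\tfrac{N(N+1)}{2}+\tfrac{N}{2}$; you make this precise by invoking Weyl equidistribution of $\{t\pi\}$ to show the ceiling corrections contribute $N/2+o(N)$. That is a genuine strengthening: it turns a stated approximation into an asymptotic with an explicit error order, and your bookkeeping of the residual $O(\log N)$ from the first-order approximation and the $+1$ from the central prototype is also more careful than anything the paper records. The three-distance alternative you mention is unnecessary here but harmless.
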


\section{Computational Results}
\begin{figure}
    \centering
    \includegraphics[width=0.32\textwidth]{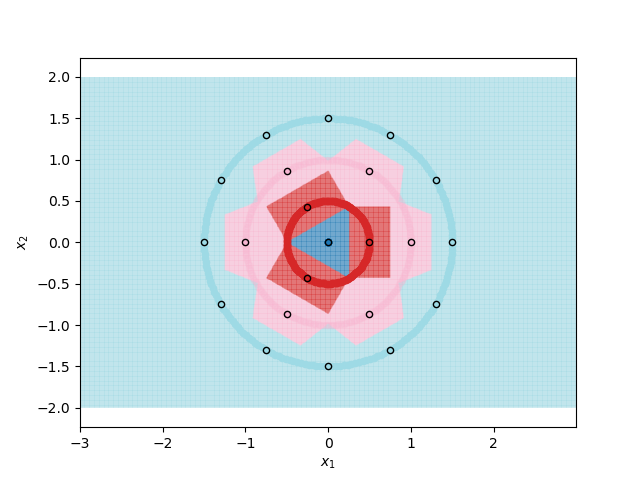}
    \includegraphics[width=0.32\textwidth]{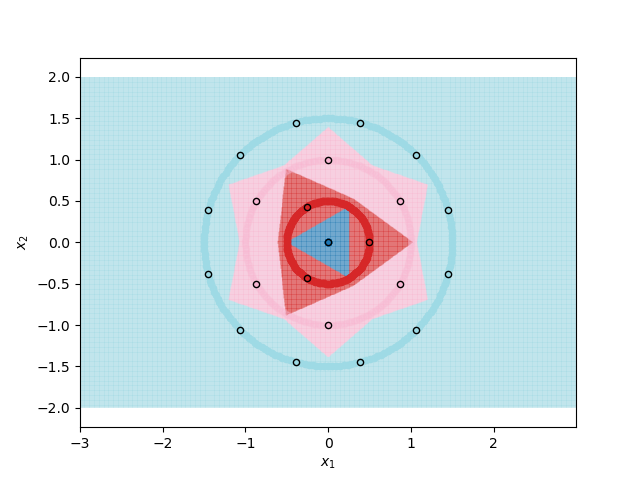}
    \includegraphics[width=0.32\textwidth]{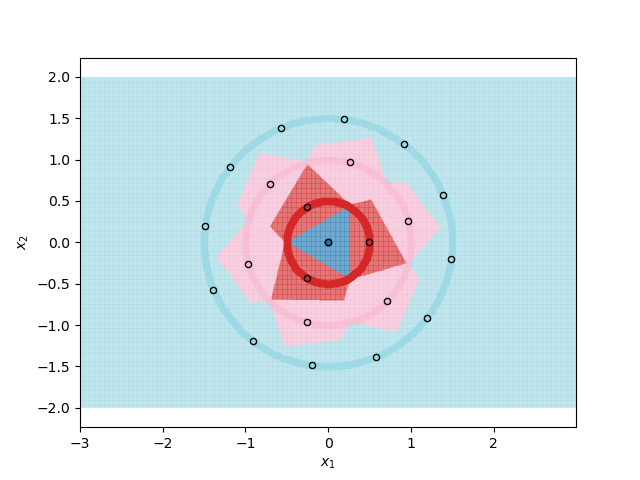}
    \caption{1NN decision boundaries when fitted on prototypes found using the FindPUGS algorithm. Each shaded circle represents a different class and the outlined points represent the assigned prototypes. The colored regions correspond to the decision boundaries created by the 1NN classifier. \textbf{Left and Center}: Prototypes on adjacent circles are not optimally rotated resulting in imperfect class separation in certain regions. \textbf{Right}: Prototypes are optimally rotated resulting in perfect class separation.}
    \label{fig:4circles_alt}
\end{figure}
We can use computational methods to empirically find the minimum number of required prototypes and see if it matches our theoretical results. We propose an iterative algorithm, Algorithm~\ref{algo:findpugs}, that proceeds from the innermost circle to the outermost one finding a near-optimal number of required prototypes in a greedy manner. Our code for this algorithm can be found at the publicly available GitHub repository associated with this paper. We note that it was shown above that the choice of $c>0$, the constant length by which the radius of each consecutive circle increases, does not affect the number of required prototypes. Nonetheless, we still include $c$ as a parameter in our algorithm to verify correctness. Running the algorithm for some large $T$, with any choice of $c$, results in the following sequence.
\begin{align*}
    1,3,6,12,13,16,19,22,26,29,32,35,38,41 \ldots
\end{align*}
This sequence appears to very quickly converge to the sequence predicted by our theorem. Curiously, while there are small differences between the first few steps of the two sequences, these differences cancel out and the cumulative number of required prototypes is identical when there are four or more circles. 
\begin{algorithm}
\SetAlgoLined
\SetKwFunction{algo}{FindPUGS}\SetKwFunction{proca}{d1}\SetKwFunction{procb}{d2}
\KwResult{An ordered list, \textbf{N}, of the minimum number of prototypes required for each circle.}
 $T \leftarrow$ the number of circles\;
 $c \leftarrow$ the length by which radii should grow\;
\SetKwProg{myalg}{Algorithm}{}{}
\myalg{\algo{T, c}}{

 $\textbf{N} \leftarrow$ [1]\;
 \For{$t = 1,2,\ldots,T-1$}{
  $m\leftarrow \textbf{N}[-1]$\;
  $n\leftarrow m+1$\;
  $p \leftarrow 0$\;
  \While{True}{
    $d_a \leftarrow \sqrt{2t^2c^2 - 2t^2c^2\cos(\frac{\pi}{m})}$\;
    $d_b \leftarrow \sqrt{2(t+1)^2c^2 - 2(t+1)^2c^2\cos(\frac{\pi}{n})}$\;
    \For{$i = 0,1,\ldots,4mn$}{
        $\theta \leftarrow \frac{i\pi}{m*n*16}$\;
        \If{d1($t,c,m,n,\theta$) $> d_b$ \textbf{and } d2($t,c,m,n,\theta$) $> d_a$}{
            $p \leftarrow n$ \;
            \textbf{break}\;
        }
    }
    \If{$p>0$}{
     \textbf{N}.append($p$)\;
     \textbf{break}\;

    }
    $n\leftarrow n+1$\;
  }
  }
  \KwRet \textbf{N}\;}{}
 \setcounter{AlgoLine}{0}
  \SetKwProg{myproc}{Procedure}{}{}
  \myproc{\proca{$t,c,m,n,\theta$}}{
    dists $\leftarrow$ [] \;
    \For{$i=0,\ldots,m-1$}{
      \For{$j=0,\ldots,n-1$}{
        dist $\leftarrow \sqrt{t^2c^2 +(t+1)^2c^2 -2t(t+1)c^2\cos(\frac{2i\pi}{m} - \frac{2j\pi}{n} -\frac{\pi}{n} - \theta)}$\;
        dists.append(dist)\;
      }
    }
  \KwRet $\min$(dists)\;}
  \setcounter{AlgoLine}{0}
  \SetKwProg{myproc}{Procedure}{}{}
  \myproc{\procb{$t,c,m,n,\theta$}}{
    dists $\leftarrow$ [] \;
    \For{$i=0,\ldots,m-1$}{
      \For{$j=0,\ldots,n-1$}{
        dist $\leftarrow \sqrt{t^2c^2 +(t+1)^2c^2 -2t(t+1)c^2\cos(\frac{2i\pi}{m} - \frac{2j\pi}{n} +\frac{\pi}{m} - \theta)}$\;
        dists.append(dist)\;
      }
    }
    \KwRet $\min$(dists)\;}
 \caption{FindPUGS Algorithm: Finding (nearly-optimal) Prototypes Using Greedy Search}
 \label{algo:findpugs}
\end{algorithm}
\section{Conclusion}
The kNN classifier is a powerful classification algorithm, but the computational cost can be prohibitively expensive. While numerous prototype methods have been proposed to alleviate this problem, their performance is often strongly determined by the underlying geometry of the data. Certain pathological geometries can result in especially poor performance of these heuristic algorithms. We analyzed one such extreme setting and demonstrated that analytical methods can be used to find optimal prototypes for training a 1NN classifier. We also proposed an algorithm for finding nearly-optimal prototypes in this setting, and used it to validate our theoretical results. Identifying and studying further pathological geometries in kNN and other machine learning models is an important direction for understanding their failure modes and improving training algorithms and prototype methods.

\bibliography{sample}

\end{document}